\newcommand{\tabincell}[2]{\begin{tabular}{@{}#1@{}}#2\end{tabular}}
\begin{document}
%
\title{Deep Manifold-to-Manifold Transforming Network}
%
%
%

\author{Tong~Zhang,
        Wenming~Zheng*,~\IEEEmembership{Member,~IEEE,}
        Zhen~Cui*,~
        Chaolong Li
\thanks{Tong Zhang is with the Key Laboratory of Child Development and Learning Science of Ministry of Education, and the Department
of Information Science and  Engineering, Southeast University, China. \protect\\
E-mail: tongzhang@seu.edu.cn.}
\thanks{Wenming Zheng and Chaolong Li are with the Key Laboratory of Child Development and Learning Science of Ministry of Education, Research Center for Learning Science,
Southeast University, Nanjing, Jiangsu 210096, China.\protect\\ 
E-mail: wenming\_zheng@seu.edu.cn, lichaolong@seu.edu.cn.}
\thanks{Zhen Cui is with the School of Computer Science and Engineering, Nanjing University of Science and Technology, Nanjing, China.\protect\\ 
E-mail: zhen.cui@njust.edu.cn.\protect\\
Asterisk indicates corresponding author.\protect\\
}
}

%
%

\markboth{Journal of \LaTeX\ Class Files,~Vol.~14, No.~8, August~2015}%
{Shell \MakeLowercase{\textit{et al.}}: Bare Demo of IEEEtran.cls for IEEE Journals}
%



\maketitle

\begin{abstract}
Symmetric positive definite (SPD) matrices (e.g., covariances, graph Laplacians, etc.) are widely used to model the relationship of spatial or temporal domain. Nevertheless, SPD matrices are theoretically embedded on Riemannian manifolds. In this paper, we propose an end-to-end deep manifold-to-manifold transforming network (DMT-Net) which can make SPD matrices flow from one Riemannian manifold to another more discriminative one. To learn discriminative SPD features characterizing both spatial and temporal dependencies, we specifically develop three novel layers on manifolds: (i) the local SPD convolutional layer, (ii) the non-linear SPD activation layer, and (iii) the Riemannian-preserved recursive layer. The SPD property is preserved through all layers without any requirement of singular value decomposition (SVD), which is often used in the existing methods with expensive computation cost.
Furthermore, a diagonalizing SPD layer is designed to efficiently calculate the final metric for the classification task. To evaluate our proposed method, we conduct extensive experiments on the task of action recognition, where input signals are popularly modeled as SPD matrices. The experimental results demonstrate that our DMT-Net is much more competitive over state-of-the-art.
\end{abstract}

\begin{IEEEkeywords}
Riemannian manifold, SPD matrix, Deep learning, Action Recognition, Emotion Recognition.
\end{IEEEkeywords}

%
\IEEEpeerreviewmaketitle

\section{Introduction}
High-order statistics feature learning is one of the most active areas in pattern recognition. Especially, in the past few decades, covariance matrix was proposed as a generic descriptor. As the robustness to rotation, scale change and outlier, region covariance descriptors have achieved promising performance in object detection~\cite{tuzel2006region},  texture classification~\cite{Harandi2012Sparse} and tracking ~\cite{porikli2006covariance}.  When dealing with skeleton-based human  action recognition, a large amount of variants are evolved from the covariance theory. For example, to represent human skeleton graph in action recognition, a kernelized version named kernel-matrix-based (KMB) descriptor~\cite{wang2015beyond} is proposed to depict the relationship between skeletal joints. In~\cite{hussein2013human}, Hussein et al. computes the statistical covariance of 3D Joints (Cov3DJ)  as spatio-temporal SPD features to encode the relationship between joint movement meanwhile takes the temporal variation of action sequences into account. The driving forces to this trend are the powerful representation ability and the behind fundamental mathematical theory of Riemannian manifold spanned by symmetric positive definite (SPD) matrices, of which covariance is a special case.

For SPD descriptors, two crucial issues should be well solved.  The first one is how to learn more discriminative features from SPD matrix space. Several approaches, such as manifold-to-manifold transformation~\cite{harandi2014manifold} and locally linear embedding~\cite{goh2008clustering},  attempt to seek for the optimal SPD embedding matrices on Riemannian manifold. Inspired by deep learning, more recently, Huang et al.~\cite{huang2017riemannian} proposed a Riemannian network to extract high-level features from SPD matrices by designing the bilinear mapping (BiMap) layer and eigenvalue rectification (ReEig) layer. The second one is how to define the metric of SPD matrices. As SPD matrices lie on Riemannian manifold rather than Euclidean space, directly applying the algorithm designed in Euclidean geometry to SPD matrices may lead poor performances. To address this problem, some approximate metrics are proposed under the framework of manifold. Especially, Log-Euclidean metric~\cite{tuzel2008pedestrian,tosato2010multi,edelman1998geometry} flattens Riemannian manifold to tangent space so that numerous conventional methods designed in Euclidean space can be used. However, this process inevitably need to calculate matrix logarithm, which has high computation cost due to the requirement of SVD.

Furthermore, the modeling ability of temporal dynamics need to be enhanced to reduce the obscure of a single matrix descriptor (e.g., covariance) for a sequence. Recently recursive learning~\cite{cho2014learning} and convolutional neural network (CNN)~\cite{krizhevsky2012imagenet} have obtained the breakthrough successes, but they only work in Euclidean space, and generalizing them to manifolds should have a constructive value to the SPD descriptor. To this end, several crucial issues need to be solved:
\begin{enumerate}
\item How to perform local convolutional filtering on SPD matrices with manifold preservation?
\item How to perform recursive learning along Riemannian manifolds so as to model temporal dynamics?
\item How to avoid  expensively-computational SVD during the computation of metrics in order to reduce computation cost?
\end{enumerate}

In this paper, we propose a novel deep manifold-to-manifold transforming network (DMT-Net) to address all above issues. To implement local convolutional filtering on SPD matrices, we specifically design an SPD layer by constraining those filters to be SPD. Under this constraint, we theoretically prove manifold preservation after convolutional filtering. To enhance the flexibility, we also design a non-linear activation layer with the guarantee of manifold preservation, which only need perform element-wise operation and thus does not require SVD.
To model sequence dynamics, we specifically design an manifold-preserved recursive layer to encode sequentially SPD matrices of segmented subclips. In metric computation, we design a diagonalizing layer to convert each SPD map into a positive diagonal matrix, which makes log-Euclidean matric be efficiently calculated and avoids high-computational SVD.  All these are integrated together and jointly trained for  recognition. To evaluate the effectiveness of the proposed method, we conduct experiments on various datasets of action recognition. The experimental results show that our method outperforms those state-of-the-arts.

In summary, our main contributions are four folds:
\begin{enumerate}
\item we propose a true local SPD convolutional layer with a theoretical guarantee of Riemannian manifold preservation, while the literature~\cite{huang2017riemannian} only employed a bilinear operation (referred to the BiMap layer).
\item we design a non-linear activation layer to avoid SVD, with a theoretical proof of manifold preservation, while the literature~\cite{huang2017riemannian} still need SVD (referred to the ReEig layer) because its framework is based on the standard logarithm operation of matrix.
\item we design a manifold-preserved recurisive layer to encode temporal dependencies of SPD matrices.
\item we develop an elegant diagonalizing trick to bypass the high computation of SVD when applying log-Euclidean mapping, while  the literature~\cite{huang2017riemannian} employed the standard log-Euclidean metric need SVD.
\end{enumerate}

\section{Related work}

The most related work includes two aspects: SPD matrix descriptors and skeleton based action recognition. Below we briefly review them.

SPD matrices~\cite{hussein2013human,hussein2013human,wang2015beyond,koles1990spatial} have been widely used as features in different patten recognition tasks.  For instance,  as a special case of SPD matrix, covariance matrices are used to encode important relationship among regions in object detection~\cite{tuzel2006region}, object tracking~\cite{porikli2006covariance}, face recognition~\cite{pang2008gabor} and so on. As SPD matrices lie on Riemannian manifolds, most algorithms attempted to extract discriminative features by operating SPD descriptors on manifolds, such as Laplacian Eigenmaps (LE)~\cite{belkin2003laplacian}, Locally Linear Embedding (LLE)~\cite{goh2008clustering} and  manifold-to-manifold transformation~\cite{harandi2014manifold}. To measure distances between two points of manifold, various Riemannian metrics have been proposed such as  affine-invariant metric~\cite{pennec2006riemannian}, Log-Euclidean metric~\cite{arsigny2006log}, et al. Further, kernelized metric versions are also developed in~ \cite{wang2015discriminant,jayasumana2013kernel}, which  map  data to an RKHS with a family of positive definite kernels.


As an important research field of computer vision, skeleton based action recognition has drawn wide attention. In recent years, various algorithms have been proposed in previous literatures~\cite{amor2016action,Chaudhry2013Bio,devanne20153,du2015hierarchical,evangelidis2014skeletal,gowayyed2013histogram,Ofli2012Sequence,seidenari2013recognizing,tao2015moving,vemulapalli2014human}. Some of them focus on modeling temporal action dynamics of 3D  joint locations.  For instance, Xia et al.~\cite{xia2012view} learned high level visual words from skeleton data and employed hidden Markov models (HMMs) to character temporal evolutions of those visual words. In~\cite{baccouche2011sequential,du2015hierarchical}, to better model temporal dependencies,  recurrent neural networks (RNNs) are employed to capture the temporal variations of trajectories.  On the other hand, some other literatures focus on describing the similarities between joints by constructing high-order statistics features lying on manifolds, e.g. covariance matrix and its evolved versions. Based on these high-order features, featuring learning methods  are performed to learn more discriminative descriptors while respecting the underlying manifold structure. In~\cite{wang2015beyond}, an SPD descriptor named kernel-matrix-based (KMB) representation is proposed as an evolution of  covariance matrix to describe motion,  then  support vector machine (SVM)  with a log-Euclidean kernel is employed for classification.  In ~\cite{harandi2014manifold}, Mehrtash et al. perform manifold-to-manifold learning to obtain a lower-dimensional while more discriminative SPD feature from the original SPD descriptor named covariance of 3D joints (Cov3DJ)~\cite{hussein2013human}. Motivated by the progress of deep learning, especially, a deep neural network architecture of SPD matrices is proposed in the literature~\cite{huang2017riemannian} recently. As illustrated in the introduction, ours is great different from this work.

\begin{figure*}[!t]
\centering
\includegraphics[width=6.8in]{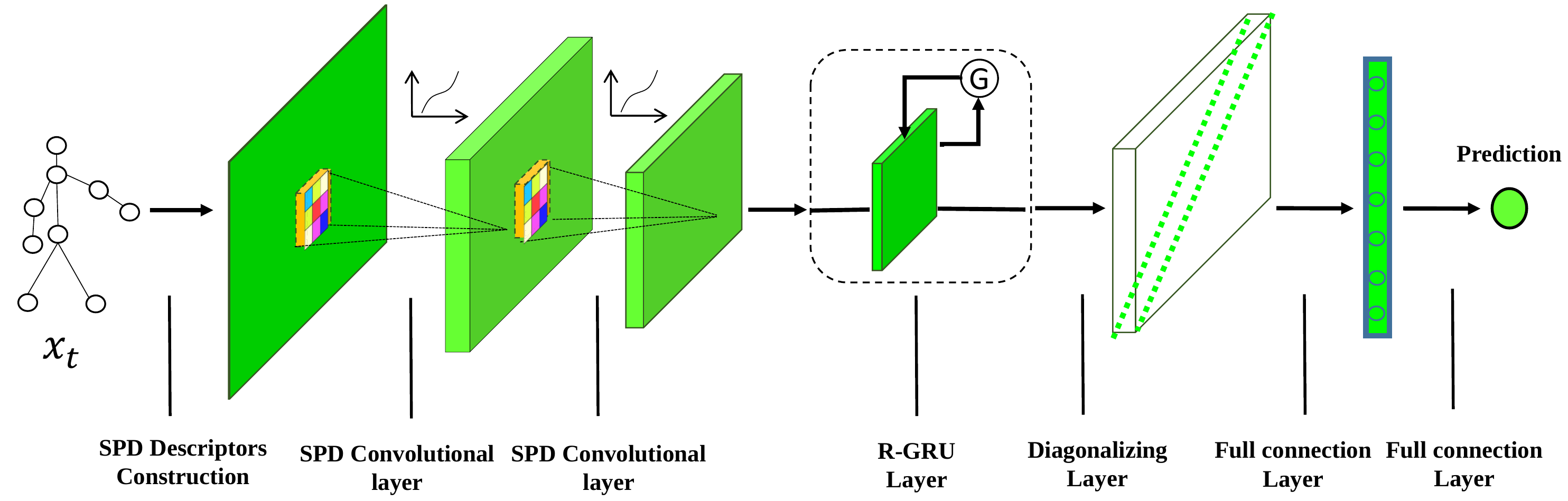}
\caption{The architecture of DMT network for action recognition. The raw spatio-temporal SPD features describing the skeleton-based actions are fed into the network. The SPD convolutional layers (Section \ref{sec:spd_cnn}) includes local SPD convolutional filtering and non-linear SPD activation charactering spatial dependencies. The SPD recursive layer captures temporal dependencies of sequential SPD descriptors with manifold preservation. The diagonalizing layer (Section \ref{sec:diag}) converts SPD matrices to the specific diagonalized SPD matrices so as to implement an efficient metric computation in the next layer. With the theoretical guarantee, the matrix descriptors flow from one Riemannian manifold to another Riemannian manifold for the sake of searching more discriminative manifold spaces.}
\label{fig_DMT}
\end{figure*}

\section{The Proposed Network}
\label{DMT-network}

In this section, the whole architecture of DMT-Net is firstly overviewed. Then, several specific-designed layers including SPD convolution, SPD non-linear activation, diagonalizing and SPD recursive layer are introduced in the sequent subsections in detail.

\subsection{Overview}

Fig.~\ref{fig_DMT} illustrates the whole architecture of our DMT-Net, which takes raw SPD matrices/descriptors as the inputs and layerwisely extracts discriminative features from one manifold to another manifold. In DMT-Net, we specifically develop several layers to make features still evolved on manifolds, which completely avoid SVD operations for SPD matrices.

Local SPD convolutional layer performs locally convolutional filtering on SPD descriptors extracted from a subclip like the standard CNN. But differently, to preserve SPD of the transformation, we need constrain the convolutional kernels into an SPD space rather than a general Euclidean space. Further, the non-linear activation function is specifically designed to satisfy manifold preservation, which is different from the literature~\cite{huang2017riemannian} using SVD on SPD matrices. The detailed introduction is described in Section~\ref{sec:spd_cnn}.

 After consecutively stacking SPD convolutional layers like those conventional CNNs, we can abstract discriminative SPD features from each subclip, which models the spatial information on manifold. Then for each action sequence, the learned SPD features of each subclip are fed into the designed SPD recursive layer for modeling temporal dependencies. Different from conventional RNN in Euclidean space, this recursive layer makes each state still flow on Riemannian manifolds. The detailed introduction is given in Section~\ref{sec:SPD recursive layer}.

To simplify the metric computation of the SPD features, we design the diagonalizing layer, which is able to compute log-Euclidean metric efficiently without high-computational SVD operations. With the proposed diagonalizing layer, calculating matrix logarithm is converted to calculate logarithm of scalar elements, and thus we do not require any SVD of matrices. More details can be found in Section~\ref{sec:diag}.

Finally the feature maps are flattened into vectors and passed through a fully connected layer followed by a softmax layer, please see Section~\ref{sec:loss}. All above these are integrated into a fully end-to-end neural network for training as well as testing.

\subsection{Local SPD convolutional layer}\label{sec:spd_cnn}

To describe more clear, we first give the definition of multi-channel SPD matrix.
\newtheorem{Definition}{\textbf{Definition}}
\begin{Definition}
Given a multi-channel matrix $\mathbf{X}\in \mathbb{R}^{C \times D  \times D}$, where $C$ is the channel number and $D$ is the spatial dimension. If each sliced channel $\mathbf{X}^{(i)} \in \mathbb{R}^{D  \times D} (i=1,\cdots,C)$ is SPD, then we call the multi-channel matrix $\mathbf{X}$ be SPD.
\end{Definition}

As the input of network, we may construct multi-channel SPD maps by using covariance or its variants. Then we expect to do local convolution filtering like the standard CNN. To preserve the property of manifold, local convolutional kernels are designed to satisfy the condition of SPD. A theoretical guarantee is given below:
\newtheorem{theorem}{\textbf{Theorem}}
\begin{theorem}\label{thm:mkconv}
Given a multi-channel SPD matrix $\mathbf{X}\in \mathbb{R}^{C \times D  \times D}$. Let $\mathbf{W}\in\mathbb{R}^{C'\times C \times K  \times K}$ be convolutional kernels, where $C'$ is the kernel number and $k$ is the kernel size. Then the convolutional operation is defined as
\begin{equation} \label{eq1_1}	
      F^{(m)}_{i,j}= \sum_{c=0}^{C}\sum_{p=0}^{K-1}\sum_{q=0}^{K-1}W_{p,q}^{(m,c)}X^{(c)}_{i+p,j+q}, m=1,\cdots,C',
\end{equation}
where $\mathbf{F}\in \mathbb{R}^{C' \times (D-K+1)  \times (D-K+1)}$ is the filtering output. If each convolutional kernel $\mathbf{W}^{(m)}\in\mathbb{R}^{C \times K  \times K}$ is multi-channel SPD, then the output map $\mathbf{F}$ is also multi-channel SPD.
\end{theorem}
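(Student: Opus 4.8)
The plan is to establish the two defining properties of an SPD matrix---symmetry and positive definiteness---for each output slice $\mathbf{F}^{(m)}$ separately; since $\mathbf{F}$ is declared multi-channel SPD precisely when every slice is SPD, this suffices. First I would observe that for a fixed output channel $m$ the operation splits over input channels, $\mathbf{F}^{(m)}=\sum_{c=1}^{C}\mathbf{F}^{(m,c)}$, where $F^{(m,c)}_{i,j}=\sum_{p,q}W^{(m,c)}_{p,q}X^{(c)}_{i+p,j+q}$ is a single-kernel, single-channel convolution. Because a finite sum of SPD matrices is again SPD, the whole problem reduces to showing that one such single-channel term is SPD whenever its kernel slice $\mathbf{W}^{(m,c)}$ and its input slice $\mathbf{X}^{(c)}$ are both SPD.

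Symmetry is the routine part. Writing out $F^{(m,c)}_{j,i}$, using $X^{(c)}_{j+p,i+q}=X^{(c)}_{i+q,j+p}$ from the symmetry of $\mathbf{X}^{(c)}$, and then relabeling the summation indices $p\leftrightarrow q$ together with $W^{(m,c)}_{q,p}=W^{(m,c)}_{p,q}$ recovers $F^{(m,c)}_{i,j}$, so each output slice is symmetric.

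The substantive step is positive definiteness. Fix a nonzero test vector $v\in\mathbb{R}^{D-K+1}$ and expand the quadratic form $v^{\top}F^{(m,c)}v=\sum_{p,q}W^{(m,c)}_{p,q}\sum_{i,j}v_i v_j X^{(c)}_{i+p,j+q}$. The idea is to absorb the spatial shifts into the vector: introduce the zero-padded shifted copies $v^{(p)}\in\mathbb{R}^{D}$ defined by $v^{(p)}_{a}=v_{a-p}$ on the valid range and $v^{(p)}_{a}=0$ otherwise, so that the inner sum collapses to $(v^{(p)})^{\top}\mathbf{X}^{(c)}v^{(q)}$. Stacking these blocks into $V=\bigl[(v^{(0)})^{\top},\dots,(v^{(K-1)})^{\top}\bigr]^{\top}\in\mathbb{R}^{KD}$, the quadratic form rewrites \emph{exactly} as $V^{\top}\bigl(\mathbf{W}^{(m,c)}\otimes\mathbf{X}^{(c)}\bigr)V$, i.e.\ a Kronecker product evaluated on the stacked vector.

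From here the conclusion follows quickly: the Kronecker product of two SPD matrices is SPD (it is symmetric and its eigenvalues are products of the positive eigenvalues of the factors), so $V^{\top}\bigl(\mathbf{W}^{(m,c)}\otimes\mathbf{X}^{(c)}\bigr)V\ge 0$ with equality only when $V=0$. The last thing to check is that $v\neq 0$ forces $V\neq 0$: the block $v^{(0)}$ is simply $v$ padded with zeros, so it vanishes only if $v$ does, making the unfolding $v\mapsto V$ injective. Hence $v^{\top}F^{(m,c)}v>0$ for every nonzero $v$, and summing over $c$ gives $\mathbf{F}^{(m)}\succ 0$ for each $m$. I expect the main obstacle to be recognizing the Kronecker structure hidden inside the convolution; once the shifted-vector stacking is set up, the remaining algebra---the Kronecker SPD fact and the injectivity check---is standard, and the bookkeeping of the shift ranges is the only place where care is needed.
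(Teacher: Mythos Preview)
Your argument is correct, and it takes a genuinely different route from the paper's. Both proofs start the same way, reducing to the single-channel case by writing $\mathbf{F}^{(m)}=\sum_{c}\mathbf{X}^{(c)}*\mathbf{W}^{(m,c)}$ and appealing to the closure of SPD under sums. The divergence is in how the single-channel convolution $\mathbf{O}=\mathbf{X}*\mathbf{W}$ is shown to be SPD. The paper factors the kernel as $\mathbf{W}=\mathbf{H}\mathbf{H}^{T}$, uses separability of rank-one convolutions to write $\mathbf{X}*(\mathbf{h}_i\mathbf{h}_i^{T})=\mathbf{G}_{\mathbf{h}_i}\mathbf{X}\mathbf{G}_{\mathbf{h}_i}^{T}$ with $\mathbf{G}_{\mathbf{h}_i}$ a full-row-rank Toeplitz matrix, and concludes that $\mathbf{O}$ is a sum of congruences of $\mathbf{X}$, hence SPD. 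You instead keep $\mathbf{W}$ intact and recognize the quadratic form $v^{\top}\mathbf{O}v$ as $V^{\top}(\mathbf{W}\otimes\mathbf{X})V$ via shifted, zero-padded copies of $v$; positive definiteness then drops out of the standard fact that a Kronecker product of SPD matrices is SPD, together with the easy injectivity of $v\mapsto V$. Your approach is arguably cleaner---it needs no factorization of $\mathbf{W}$ and invokes one classical lemma---while the paper's congruence decomposition has the structural payoff of exhibiting $\mathbf{O}$ explicitly as $\sum_i \mathbf{G}_{\mathbf{h}_i}\mathbf{X}\mathbf{G}_{\mathbf{h}_i}^{T}$, which makes the ``convolution as linear map'' viewpoint concrete and could be reused elsewhere in the architecture.
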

\begin{proof}
Please see Appendix~\ref{APP_B}.
\end{proof}

To construct an SPD convolutional kernel, we employ the multiplication of one matrix $\mathbf{V}^{m,c}\in \mathbb{R}^{D  \times D}$, formally,
\begin{eqnarray} \label{eq1_2}	
       \mathbf{W}^{m,c}=(\mathbf{V}^{m,c})^T(\mathbf{V}^{m,c})+\epsilon\mathbf{I},
\end{eqnarray}
where $\epsilon\rightarrow 0^+$ and $\mathbf{I}$ is an identity matrix. Obviously, the constructed $\mathbf{W}$ is SPD. Hence, during network learning, we only need to learn the parameter $\mathbf{V}$, and perform Eqn.~(\ref{eq1_2}) to implement the SPD convolution.

For the non-linear activate function, we may employ the element-wise operation on some specific functions, which are proven to be SPD transformations below.
\begin{theorem}\label{thm:activ}
Given an SPD matrix, the transformation of element-wise activation with $exp(\cdot), sinh(\cdot)$ or $cosh(\cdot)$ is still SPD.
\end{theorem}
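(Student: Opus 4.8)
The plan is to reduce the claim to the \emph{Schur product theorem} by expanding each activation in its Taylor series and recognizing that the scalar powers appearing there assemble into Hadamard powers of the input. Let $\mathbf{X}$ denote the given SPD matrix with entries $x_{ij}$, and let $\circ$ denote the entrywise (Hadamard) product, so that $\mathbf{X}^{\circ k}$ has $(i,j)$ entry $x_{ij}^{k}$. The single ingredient I would invoke is the Schur product theorem in its two forms: the Hadamard product of two positive semidefinite matrices is positive semidefinite, and the Hadamard product of two positive definite matrices is positive definite. A trivial induction then gives $\mathbf{X}^{\circ k}\succ 0$ for every $k\ge 1$, whereas $\mathbf{X}^{\circ 0}=\mathbf{1}\mathbf{1}^{T}\succeq 0$.

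Next I would write each element-wise activation as a matrix series built from these Hadamard powers; writing $\exp_\circ,\sinh_\circ,\cosh_\circ$ for the corresponding element-wise maps and using that $\exp$ has an everywhere-convergent Taylor series,
\begin{equation}
\exp_\circ(\mathbf{X}) = \sum_{k=0}^{\infty}\frac{\mathbf{X}^{\circ k}}{k!},
\end{equation}
while $\cosh_\circ(\mathbf{X})$ and $\sinh_\circ(\mathbf{X})$ are the analogous series retaining only the even and only the odd Hadamard powers, respectively. Each series converges entrywise, hence in any matrix norm because $\mathbf{X}$ is finite-dimensional. Every summand is a nonnegative scalar times a Hadamard power of $\mathbf{X}$, and is therefore positive semidefinite by the first paragraph; moreover each summand, each partial sum, and the limit are symmetric because $\mathbf{X}$ is symmetric, which already delivers symmetric positive semidefiniteness of all three images.

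To upgrade semidefiniteness to definiteness, I would isolate one strictly positive definite term in each series: the $k=1$ term $\mathbf{X}$ for $\exp_\circ$ and for $\sinh_\circ$, and the $k=1$ term $\mathbf{X}^{\circ 2}/2$ for $\cosh_\circ$. Each of these is positive definite by the strengthened Schur theorem, while every remaining summand is positive semidefinite. Writing the full image as this definite term plus a positive semidefinite remainder, I conclude it is positive definite, since adding a positive semidefinite matrix can only increase the smallest eigenvalue. This completes the argument for all three activations.

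I expect the only genuine subtlety to be the passage to the infinite-series limit: positive semidefiniteness survives limits because the semidefinite cone is closed, but positive definiteness is an open condition that could in principle degrade in the limit. The remedy is already built into the previous paragraph---the isolated definite term furnishes a uniform lower bound, namely $\lambda_{\min}\big(\sum_{k\le N}\cdots\big)\ge \lambda_{\min}(\mathbf{X})>0$ (or $\lambda_{\min}(\mathbf{X}^{\circ 2}/2)>0$ for $\cosh$), on every partial sum, so the limit remains strictly positive definite rather than merely semidefinite.
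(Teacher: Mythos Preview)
Your proposal is correct and follows essentially the same route as the paper: expand each activation in its Taylor series, recognize the terms as Hadamard powers of $\mathbf{X}$, and invoke the Schur product theorem to conclude positivity. You are in fact more careful than the paper about the passage to the limit and about isolating a strictly definite summand to guarantee SPD rather than merely semidefiniteness, but the underlying idea is identical.
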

\begin{proof}
Please see Appendix~ \ref{APP_C}.
\end{proof}

According to Theorem~\ref{thm:activ}, we can implement non-linear transformations without the high-computational SVD, which is often used in the previous methods including the recent work~\cite{huang2017riemannian}. The convolutional filtering and the non-linear activation form the basic local SPD convolutional layer. The SPD convolutional layer is rather flexible in efficient computation, which can be directly implemented by the conventional matrix operation like the standard convolutional layer.

\subsection{SPD Recursive layer}\label{sec:SPD recursive layer}

Inspired by the philosophy of the classic gated recurrent unit (GRU)~\cite{cho2014learning}, we design the manifold-preserved recursive layer as follows:
\begin{eqnarray}
 \label{gru_1}    \mathbf{R}_t= \sigma_g(\mathbf{W}_{fr}^T\mathbf{F}_t\mathbf{W}_{fr}+\mathbf{W}_{hr}^T\mathbf{H}_{t-1}\mathbf{W}_{hr}+b_r+\epsilon*\mathbf{I}), \\
  \label{gru_2}   \mathbf{Z}_t= \sigma_g(\mathbf{W}_{fz}^T\mathbf{F}_t\mathbf{W}_{fz}+\mathbf{W}_{hz}^T\mathbf{H}_{t-1}\mathbf{W}_{hz}+b_z+\epsilon*\mathbf{I}),\\
 \label{gru_3}    \widetilde{\mathbf{H}}_t= sinh(\mathbf{W}_{fh}^T\mathbf{F}_t\mathbf{W}_{fh}+\mathbf{H}_{t-1} \odot \mathbf{R}_t+b_h+\epsilon*\mathbf{I}),~~\\
 \label{gru_4}   \mathbf{H}_t=  \mathbf{Z}_t \odot \mathbf{H}_{t-1}+\widetilde{\mathbf{H}}_t,~~~~~~~~~~~~~~~~~~~~~~~~\\
 \nonumber ~s.t.~b_r\geq0,~b_z\geq0,~ b_h\geq0, \epsilon>0,
\end{eqnarray}
where
 \begin{eqnarray}
 \nonumber    \sigma_g(\mathbf{X})=\frac{exp(\mathbf{X})}{max( exp(\mathbf{X}))}.~~~~~~~~~~~~
\end{eqnarray}
In SPD recursive layer, the projection matrices denoted as $\mathbf{W}_{fr}, \mathbf{W}_{hr}, \mathbf{W}_{hz}, \mathbf{W}_{fr}$ and $\mathbf{W}_{fh}$ are trainable parameters for generating desirable hidden states through bilinear projection. The nonlinear activation function denoted as $\sigma_g(\cdot)$ is designed to generate two gates, denoted as $\mathbf{R}_t,  \mathbf{Z}_t$, with manifold preservation. $\mathbf{R}_t,  \mathbf{Z}_t$ have values ranging in [0, 1] and decide whether to memorize the previous output states through Hadamard product.  Another nonlinear activation function, $sinh$, is employed to endow flexibility to current hidden state. $b_r, b_z$ an $b_h$ are trainable biases of positive values and  $\mathbf{H}_t$ denotes the current output state. Thus, SPD recursive layer is able to well model the temporal dependencies on manifold  by properly memorizing or forgetting the hidden states.

Moreover, we provide a theoretical guarantee for manifold preservation of the  SPD recursive layer:
\begin{theorem}\label{thm:SPD recursive layer}
Given sequential SPD feature maps denoted as $\mathbf{F}_1, \cdots, \mathbf{F}_T$  where $T$ is the temporal length, the defined model above Eqn.~\ref{gru_1}$\sim$~\ref{gru_4} is manifold-preserved.
\end{theorem}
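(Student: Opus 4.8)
The plan is to prove manifold preservation by induction on the time index $t$, showing that if the initial state $\mathbf{H}_0$ is SPD, then every hidden state produced by Eqns.~\ref{gru_1}--\ref{gru_4} remains SPD whenever the inputs $\mathbf{F}_1,\dots,\mathbf{F}_T$ are SPD. The inductive hypothesis is that $\mathbf{H}_{t-1}$ is SPD; combined with the assumed SPD property of $\mathbf{F}_t$, I want to conclude that $\mathbf{R}_t$, $\mathbf{Z}_t$, $\widetilde{\mathbf{H}}_t$, and finally $\mathbf{H}_t$ are all SPD. Throughout, symmetry is preserved automatically by every operation in play (bilinear congruence, matrix sums, element-wise functions, and Hadamard products each map symmetric matrices to symmetric matrices), so the genuine content is strict positive definiteness.

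First I would assemble four elementary building blocks. (i) For any real $\mathbf{W}$ and SPD $\mathbf{X}$, the bilinear congruence $\mathbf{W}^T\mathbf{X}\mathbf{W}$ is positive semidefinite. (ii) A sum of positive semidefinite matrices is positive semidefinite, and adding the term $\epsilon\mathbf{I}$ with $\epsilon>0$ upgrades any such PSD matrix to SPD; the nonnegative biases $b_r,b_z,b_h\ge 0$ only contribute a further PSD summand and hence never destroy positivity. (iii) By Theorem~\ref{thm:activ}, the element-wise maps $exp(\cdot)$ and $sinh(\cdot)$ send SPD matrices to SPD matrices, and since $\sigma_g(\mathbf{X})=exp(\mathbf{X})/max(exp(\mathbf{X}))$ merely rescales the element-wise exponential by a positive scalar, $\sigma_g$ also preserves SPD. (iv) By the Schur product theorem, the Hadamard product of two SPD matrices is again SPD.

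With these in hand, the verification is a direct trace through the four equations. In Eqn.~\ref{gru_1}, each bilinear term is PSD by (i), the bias is PSD by (ii), and the $\epsilon\mathbf{I}$ term makes the entire argument of $\sigma_g$ strictly positive definite; applying (iii) shows $\mathbf{R}_t$ is SPD, and identically $\mathbf{Z}_t$ is SPD from Eqn.~\ref{gru_2}. In Eqn.~\ref{gru_3} the new ingredient is the Hadamard term $\mathbf{H}_{t-1}\odot\mathbf{R}_t$, which is SPD by (iv) since both factors are SPD; adding the PSD bilinear term, the PSD bias, and $\epsilon\mathbf{I}$ yields an SPD argument, so $\widetilde{\mathbf{H}}_t$ is SPD via the $sinh$ part of (iii). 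Finally, in Eqn.~\ref{gru_4}, $\mathbf{Z}_t\odot\mathbf{H}_{t-1}$ is SPD by (iv) and $\widetilde{\mathbf{H}}_t$ is SPD, so their sum $\mathbf{H}_t$ is SPD, closing the induction.

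I expect the main obstacle to be the two Hadamard-product steps, since the gating multiplications are the only place where the recurrence leaves the familiar world of congruence-and-sum. The crux is the Schur product theorem together with its sharpening that the Hadamard product of two \emph{strictly} positive definite matrices is strictly (not merely semi-) definite; this holds because both gates and the previous state have strictly positive diagonals. One subtlety worth flagging explicitly is the reading of the scalar biases $b_r,b_z,b_h$: interpreting them as entrywise additions contributes a nonnegative multiple of the all-ones matrix, which is PSD, so positivity survives regardless, but stating this makes the argument airtight.
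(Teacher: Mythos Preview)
Your proposal is correct and follows essentially the same route as the paper: induction on $t$, with the inductive step assembled from (i) bilinear congruence yielding PSD, (ii) the $\epsilon\mathbf{I}$ term upgrading to SPD, (iii) Theorem~\ref{thm:activ} for the element-wise activations (and hence for $\sigma_g$ after a positive scalar division), and (iv) the Schur product theorem for the Hadamard gates. The one point of divergence is the base case: you assume $\mathbf{H}_0$ is SPD, whereas the paper takes $\mathbf{H}_0=\mathbf{0}$, which is not SPD, and handles $t=1$ by observing that every term containing $\mathbf{H}_0$ simply vanishes so that $\mathbf{H}_1=\widetilde{\mathbf{H}}_1$; you should adjust your base case accordingly, but the rest of your argument is the paper's proof, stated with somewhat more care about the PSD/SPD distinction and the reading of the scalar biases.
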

\begin{proof}
Please see Appendix~\ref{APP_D}.
\end{proof}

\subsection{Diagonalizing layer}\label{sec:diag}
The conventional log-Euclidean metric computation~\cite{arsigny2006log} need transform SPD matrices on manifold into points of general Euclidean space so that the conventional metrics can be utilized. Formally, given an SPD map $\mathbf{Z}\in\mathbb{R}^{D\times D}$, the transformation is defined:
\begin{eqnarray} \label{eq_2_1}	
     log( \mathbf{Z})=\mathbf{U}^Tlog(\mathbf{\Sigma})\mathbf{U}, ~~\text{s.t.,}~\mathbf{Z}=\mathbf{U}^T\mathbf{\Sigma}\mathbf{U},
\end{eqnarray}
where $\mathbf{U},~\mathbf{\Sigma}$ are eigenvectors and eigenvalues of $\mathbf{Z}$, and $log(\mathbf{\Sigma})$ is the diagonal matrix of the eigenvalue logarithms.

The key issue is how to bypass high-computational SVD during metric computation. To the end, we develop an elegant trick to map a  Riemannian manifold denoted as $\mathcal{M}_1$ in size of  $D\times D$ into a diagonal  Riemannian manifold $\mathcal{M}_2$ in size of  $D^2\times D^2$. Concretely, two main operations are sequentially performed, which include non-linear positive activation $\delta$ and matrix diagonalizing. Formally,
\begin{eqnarray} \label{eq_3}	
       \mathbf{D}=\mathfrak{D}(\mathbf{Z})=diag(flatten(\delta( \mathbf{Z}))),
\end{eqnarray}
where $\mathbf{D}$ is the output of diagonalizing layer. The standard non-linear activation $\delta$ on the input SPD matrix $\mathbf{Z}$ is used to produce positive activated values. Here the elementwise $exp(\cdot)$ function may be employed. After the non-linear transformation, we flatten the response matrix into a vector and diagonalize it into a diagonal matrix, where the diagonal elements take this vector. Until now, we have done the conversion from a general SPD matrix on $\mathcal{M}_1$ to a specific SPD one, which still lies on a Riemannian manifold denoted as  $\mathcal{M}_2$ due to its SPD property. But for the diagonal SPD matrix, the matrix logarithm only need perform the general element-wise log operation on each diagonal element. Obviously, there are two advantages in the above distance computation: i) completely bypass SVD; ii) efficiently compute without the true diagonalization in the implementation due to the only requirement of calculating on non-zero elements. That means, the diagonalizing operation does not increase the use of memory size while reducing the computation cost.

\subsection{The cross-entropy loss}\label{sec:loss}
After the diagonalizing layer, the discriminative feature maps of dynamic sequence are obtained. To remove those uninformative zeros values, we vectorize these diagonal feature maps with only non-zero elements and then pass them through a full connection layer and a softmax layer. Finally, we use cross entropy loss defined as follows to represent the objective loss function, which can be written as
\begin{eqnarray} \label{eq7_1}
 E =-\sum_{j=1}^N\sum_{c=1}^C\tau(y_j, c) \times log P(c|\mathbf{X}_j),
\end{eqnarray}
\begin{equation} \label{eq7_2}
      \nonumber   \tau(x,y)=
      \begin{cases}
         1,  &  \mbox{if} ~x=y; \\
         0,  &  \mbox{otherwise.}
       \end{cases},~~~
\end{equation}
where $\mathbf{X}_j$ represents the $j$-th training sample of the training set, $P(c|\mathbf{X}_j)$ denotes the probability for the input $\mathbf{X}_j$ being predicted as the $c$-th class, $E$ denotes the cross entropy loss, $y_{i}$ is the label of the $i$th training sample.

\section{Experiments}\label{Experiments}
 We evaluate our DMT-Net by conducting experiments on three action recognition datasets, i.e., the Florence 3D Actions dataset~\cite{seidenari2013recognizing}, HDM05 database~\cite{muller2007documentation} and the Large Scale Combined (LSC) dataset~\cite{zhanglarge}, respectively. We firstly introduce the three datasets. Then, we show the implemental details including the preprocessing process, the SPD feature extrction, and the parameter settings. Finally, we compare the experimental results with the state-of-the-art methods.

\subsection{Datasets}\label{Dataset}
The Florence 3D Actions dataset contains 9 activities:  `wave (WV)', `drink from a bottle (DB)', `answer phone (AP)', `clap (CL)', `tight lace (TL)', `sit down (SD)', `stand up (SU)', `read watch (RW)' and `bow(BO)'. These actions are performed by 10 subjects  for 2 or 3 times yielding 215 activity samples in total and represented by 15 joints without depth data. Similar actions such as `drink from a bottle' and  `answer phone' are easy to be confused.

The HDM05 dataset contains 2,273 sequences of 130 motion classes executed by five actors named `bd', `bk', `dg', `mm' and `tr', and each class has 10 to 50 realizations. The skeleton data contains  31 joints which are much more than those in Florence datasets. The intra-class variations and high number of classes make this dataset more challenging.

The LSC dataset is created by combing nine existing public datasets with both red, green and blue (RGB) video and depth information. In total, it contains 4953 video sequences of 94 action classes performed by 107 subjects. As these video sequences come from different individual datasets, the variations with respect to subjects, performing manners and backgrounds are very large. Moreover,  the number of samples for each action is different. All these factors, i.e. the large size, the large variations and the data imbalance for each class, make LSC dataset more challenging for recognition.

\subsection{Implemental details}\label{details}
In this section, we will specify the implemental procedures of the proposed method in details, which consist of the following parts:

\subsubsection{Preprocessing:}\label{preprocessing}
Before we extract SPD descriptors from skeleton data, we conduct preprocessing to the skeleton data to reduce the variations, such as body orientation variation, body scale variation and so on. This process is done by the following three steps:

1) We downsample action sequences to a unified length by splitting them into a  fixed number of subsequences and then randomly choose one frame from each subsequence.

2) We randomly scale the skeletons with different factors ranging in [0.95, 1.05] to improve the adaptive scaling capacity.

3) We randomly rotate the skeletons along   $x,y$ and $z$ axis with angles ranging in [-45, 45] during training stage, which make the model be robust to orientation variation. Fig.~\ref{rotation} shows this process with two actions of Florence dataset.

\subsubsection{SPD feature extraction:}\label{SPD feature extrction}
The process of extracting  spatio-temporal SPD features  from skeleton data is rather simple. Let $\mathbf{x}_t \in \mathbb{R}^{N_j\times3}$  represent the joint locations of the $t$-th frame where $N_j$ is the number of joints, then the SPD feature of the $t$-th denoted as $\mathbf{X}_t, (t=1,\cdots,T)$ can be calculated as follows:
\begin{equation} \label{eq8_1}
    \mathbf{X}_t=(\mathbf{x}_t-\mathbf{\bar x})(\mathbf{x}_t-\mathbf{\bar x})^{'},
\end{equation}
where $'$ means the transpose operator and $\mathbf{\bar s}$ can be calculated as follows
\begin{equation} \label{eq8_2}
\mathbf{\bar x}=\sum_{t=1}^{T}{\mathbf{x}_t }.
\end{equation}

\subsubsection{Parameter setting:}\label{parameter setting}
In preprocessing process, the sequences are split into 12 subsequences for all datasets. For recognition, we employ the same architecture of DMT-Net when evaluating all the datasets, where the DMT-Net contains  two SPD convolutional layers, one SPD recursive layer, one diagonalizing layer, one fully connected layer and one softmax layer. For SPD convolutional layers, we double the number of SPD convolutional kernels in the second convolutional layer comparing to the first layer. For  SPD recursive layer, the sizes of matrices in each channel of the hidden states are all set to be $9\times9$. The numbers of nodes in the fully connected layer are set to be 800.

\begin{figure}[!t]
\centering
\includegraphics[width=3.2in,,height=2.5in]{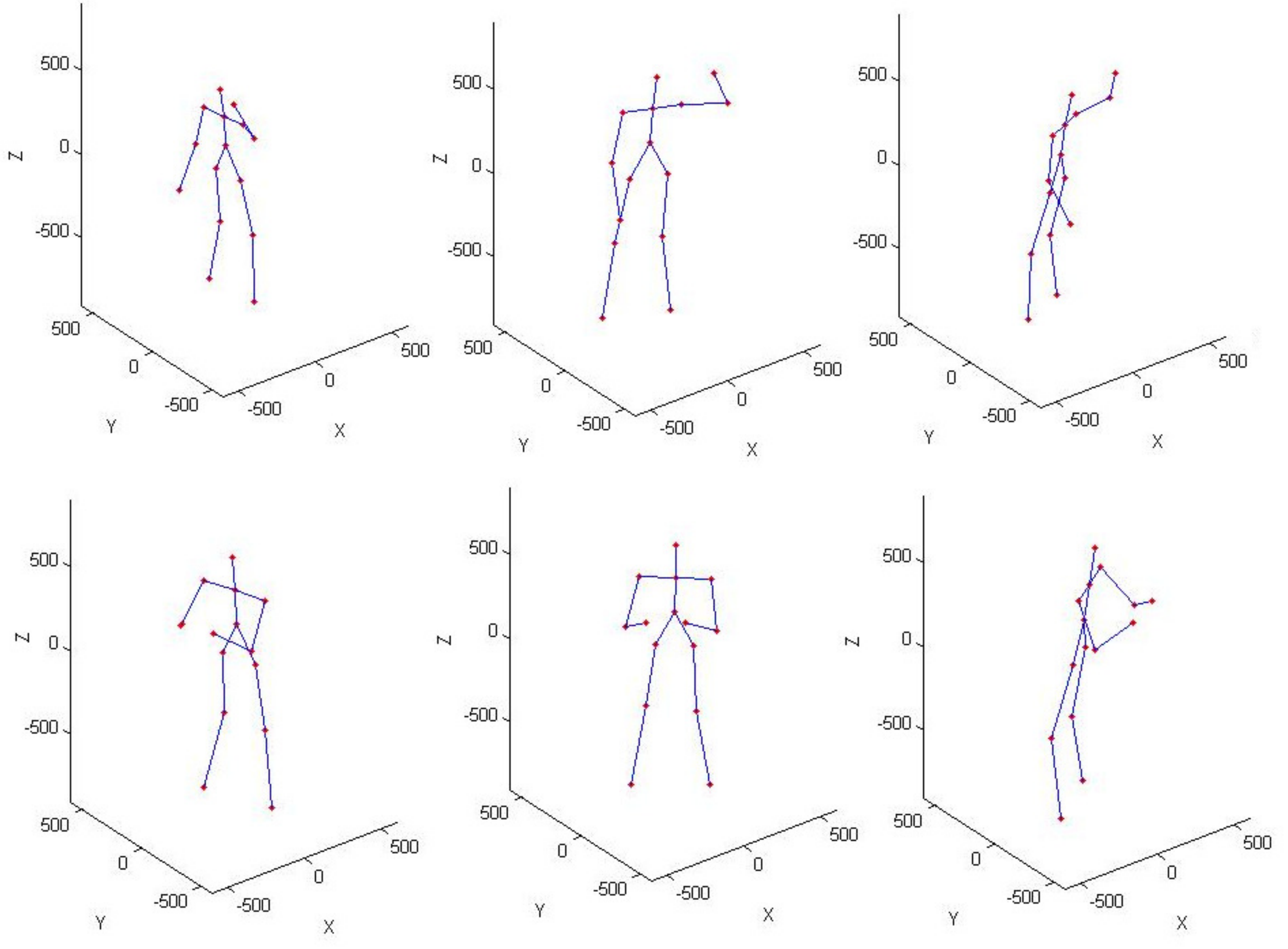}
\caption{Detail of rotation around $z$ axis. The first row contains rotated samples of action 'wave' and the second row contains rotated samples of action 'clap'.}
\label{rotation}
\end{figure}

\subsection{Experiments on Florence dataset}\label{Florence}

\begin{table}[!t]
\renewcommand{\arraystretch}{1.3}
\begin{center}
\begin{tabular}{|c|c|}
\hline
Method & Accuracy ($\%$) \\
\hline
\tabincell{c}{Multi-part Bag-of-Poses ~\cite{seidenari2013recognizing}} & 82.00 \\
\hline
\tabincell{c}{Lie group~\cite{vemulapalli2014human}} & 90.08 \\
\hline
 \tabincell{c}{Shape Analysis on Manifold ~\cite{devanne20153}} & 87.04  \\
\hline
Elastic Function Coding~\cite{anirudh2015elastic} & 89.67 \\
\hline
Graph Based Representation~\cite{wang2016graph} & 91.63 \\
\hline
 \tabincell{c}{Multi-instance Multitask Learning~\cite{yang2017discriminative}} & 95.35 \\
\hline
 \tabincell{c}{Tensor Representation~\cite{koniusz2016tensor}} & 95.47 \\
\hline
 DMT-Net   &  \textbf{98.14}\\
\hline
\end{tabular}
\end{center}
\caption{The comparisons on Florence dataset.}
\label{Label_Florence}
\end{table}

\begin{figure}[!t]
  \centering
  \includegraphics[height=2in, width=2.4in]{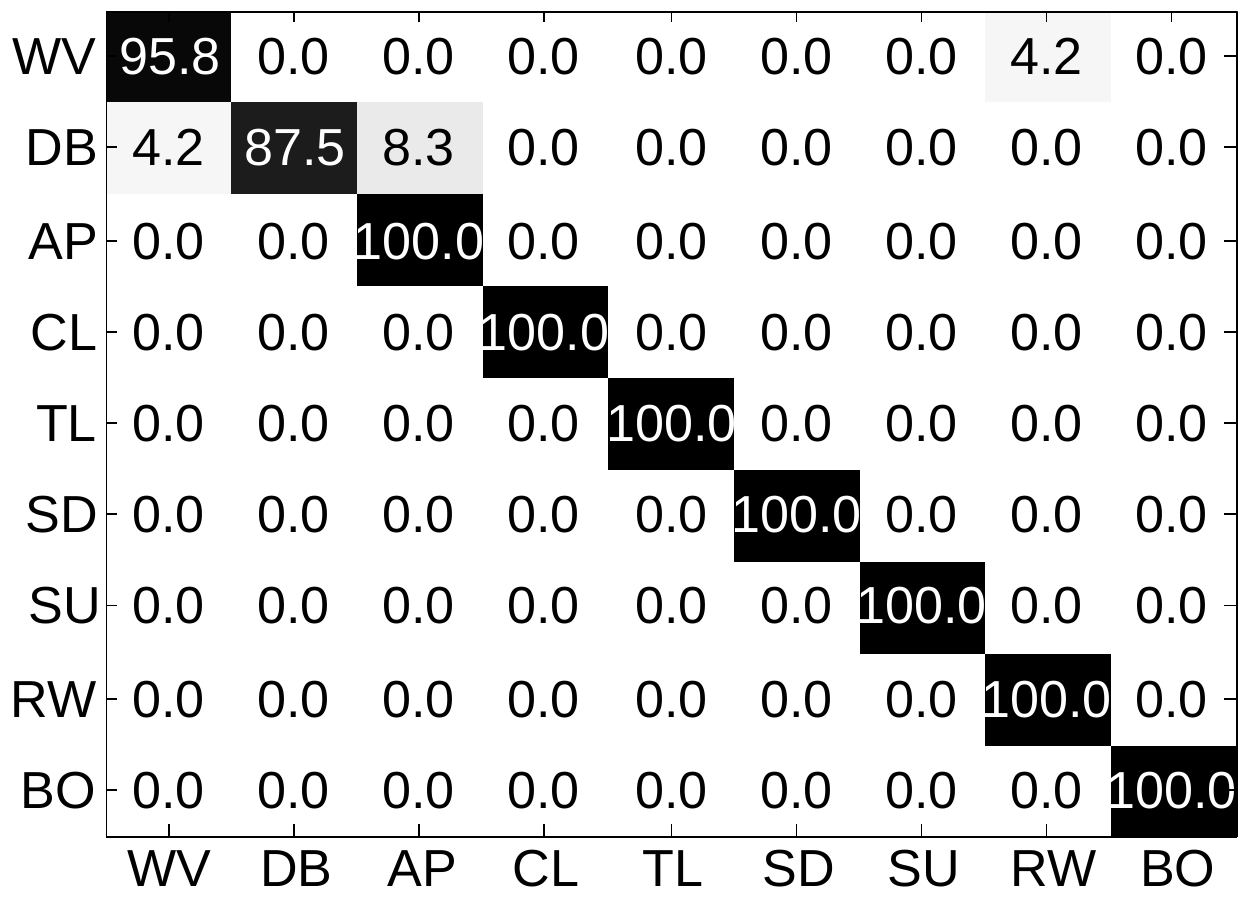}
  \caption{The experimental results of confusion matrix on Florence dataset.}
  \label{Florence_conf}
\end{figure}

For Florence dataset, we strictly follow the protocol of  leave-one-subject-out validation as \cite{wang2016graph}, where skeleton data of nine subjects is used for training and the resting part is for testing. The sizes of the SPD filters in the two SPD convolutional layer are set to be $4\times1\times6\times6$ and  $8\times4\times3\times3$ respectively. The result of the proposed DMT-Net dataset on Florence dataset is shown in Table~\ref{Label_Florence} and  compared with various existed algorithms, such as ~\cite{vemulapalli2014human,anirudh2015elastic,wang2016graph}. Among these compared literatures, the algorithm proposed in~\cite{koniusz2016tensor} achieves  the highest accuracy of  95.47$\%$,  while the accuracy of our proposed DMT-Net is 98.14$\%$ which is almost 3 percent higher.

Fig.~\ref{Florence_conf} shows the confusion matrices of the different recognition results of actions. Our proposed model performs well on seven actions with  100$\%$ recognition rates. Main confusion appears between two pairs of actions which are `wave' versus `drink from a bottle' and  `drink from a bottle' versus `answer phone'. Intuitively, this confusion is reasonable as the two pairs of actions are really similar.

\subsection{Experiments on HDM05 dataset}\label{HDM05}
To achieve a comprehensive comparison to the state-of-the-art methods on  HDM05 dataset, we conduct experiments by following two different protocols employed in previous literatures. For the first protocol  employed in~\cite{wang2015beyond}, actions performed by two subjects named `bd' and `mm' are used for training and the remaining samples are for testing; for the second protocol employed in~\cite{huang2017riemannian}, 10 random evaluations are conducted and for each evaluation a half of the samples of each class  are randomly selected for training and the rests for testing. The sizes of the SPD convolutional kernels in the two SPD convolutional layer are set to be $8\times1\times6\times6$ and $16\times8\times3\times3$ respectively.

The comparisons on HDM05 dataset are shown in Table \ref{Label_HDM05}. For both protocols, our DMT-Net achieves the  best performance comparing to the-state-of-the-art methods. Specifically, for the second protocol, DMT-Net is compared with a Riemannian network proposed in~\cite{huang2017riemannian} which is another kind of deep network on Riemannian manifold. The proposed DMT-Net outperforms the Riemannian network with the recognition rate of 83.24$\%$, which is almost 12 percent higher.

\begin{table}
\renewcommand{\arraystretch}{1.3}
\begin{center}
\begin{tabular}{|c|c|c|}
\hline
Method & \tabincell{c}{Protocol $\#$1\\ Accuracy ($\%$)} & \tabincell{c}{Protocol $\#2$\\ Accuracy($\%$) }  \\
\hline
\tabincell{c}{RSR-ML  \\~\cite{harandi2014manifold}} & 40.00 & - \\
\hline
\tabincell{c}{Cov-RP~\cite{tuzel2006region}} & 58.90 &  -\\
\hline
\tabincell{c}{Ker-RP-RBF~\cite{wang2015beyond}} & 66.20 & -\\
\hline
\tabincell{c}{Lie group  \cite{vemulapalli2014human}} & - &70.26 $\pm$ 2.89\\
\hline
\tabincell{c}{LieNet~\cite{huang2016deep}} & - & 75.78 $\pm$ 2.26\\
\hline
\tabincell{c}{SPDNet~\cite{huang2017riemannian}} &-& 61.45 $\pm$ 1.12\\
\hline
 DMT-Net   &  \textbf{76.25} &\textbf{83.24 $\pm$ 1.52}\\
\hline
\end{tabular}
\end{center}
\caption{The comparisons on HDM05 dataset.}
\label{Label_HDM05}
\end{table}


\begin{table}[h]
\renewcommand{\arraystretch}{1.3}
\begin{center}
{\begin{tabular}{|c|c|c|c|}
\hline
Protocol & Method & ~~~~Precision ($\%$)~~~~ & ~~Recall ($\%$)~~ \\
\hline
\multirow{3}{*}{RCSam}&\tabincell{c}{HON4D ~\cite{oreifej2013hon4d}}  & 84.6& 84.1 \\ \cline{2-4}
{} &\tabincell{c}{Dynamic \\ skeleton~\cite{zhanglarge}} & 85.9 &   85.6 \\ \cline{2-4}
{} &DMT-Net   &  \textbf{87.0}  & 85.1 \\ \hline
\multirow{3}{*}{RCSub}&\tabincell{c}{HON4D ~\cite{oreifej2013hon4d}}& 63.1 & 59.3 \\ \cline{2-4}
{} &\tabincell{c}{Dynamic \\ skeleton ~\cite{zhanglarge}} & 74.5 &  73.7 \\ \cline{2-4}
{} &DMT-Net  &  \textbf{81.0} &\textbf{78.5} \\ \hline
\end{tabular}}
\end{center}
\caption{The comparisons on LSC dataset following RCSam and RCSub protocols.}
\label{LSC_result}
\end{table}

\subsection{Experiments on Large Scale Combined dataset}\label{LSC}
We conduct experiments by LSC dataset by following two different protocols employed in~\cite{zhanglarge}. For the first protocol named  Random Cross Sample (RCSam) using data of 88 action classes,  half of the samples of each class are randomly selected as training data while the rests are used as testing data.  For the second protocol named  Random Cross subject (RCSub) using data of  88 action classes,  half of the subjects are randomly selected as training data and the rest subjects are used as test data. In both protocols, only skeleton data are used for recognition. Due the imbalance of samples in each class,  the values of  precision and  recall are employed for evaluating the performance instead of accuracy. The sizes of the SPD convolutional kernels in the two SPD convolutional layer are set to be $8\times1\times6\times6$ and $16\times8\times3\times3$.


The comparisons on LSC dataset are shown in Table~\ref{LSC_result}. For both protocols, our DMT-Net achieves the  best precisions comparing to the-state-of-the-art methods. Specifically,  for RCSub protocol, DMT-Net achieves relatively better performance: the values of precision and recall are almost 7 and 5 percent high than~\cite{zhanglarge}, which means DMT-Net is more robust to the variation caused by differences of subjects.

\begin{table}[!t]
\renewcommand{\arraystretch}{1.3}
\begin{center}
\begin{tabular}{|c|c|}
\hline
Method & Accuracy ($\%$) \\
\hline
two 1d-CNN layers + one GRU layer (Euclidean space) & 70.27 \\
\hline
DMT-Net without SPD recursive layer   & 71.74 \\
\hline
DMT-Net without SPD convolutional layer  & 67.22  \\
\hline
DMT-Net & \textbf{76.25} \\
\hline
\end{tabular}
\end{center}
\caption{The architecture evaluation with  HDM05 dataset using protocol $\#$1.}
\label{Label_Evaluation}
\end{table}

\subsection{Analysis of DMT-Net}\label{analysis}
In this section, we will conduct additional experiments to verify the effectiveness of designed layers. The following three baseline experiments are respectively conducted for this purpose:
\begin{enumerate}
\item[(1)] Manifold representation vs. Euclidean representation. For Euclidean representation, we take the similar network structure with DMT-Net. The network in Euclidean space contains two 1d-convolutional layers and one GRU layer, where raw features of joint locations (rather than covariance features) are input into the network.
\item[(2)] DMT-Net vs. DMT-Net without SPD recursive layer. To verify the effectiveness of SPD recursive layer, we construct a network by removing SPD recursive layer from  DMT-Net. The sizes of SPD filters in the two convolutional layers are set as same as DMT-Net, which are $8\times1\times6\times6$ and $16\times8\times3\times3$ respectively.
\item[(3)] DMT-Net vs. DMT-Net without SPD convolutional layer. We remove the two SPD convolutional layers from DMT-Net so that the resulting network contains one SPD recursive layer. The size of the hidden state in SPD recursive layer is $20\times$20.
\end{enumerate}
We evaluate these networks on HDM05 dataset with its protocol $\#$1. The results are shown in Table~\ref{Label_Evaluation}. From the results we can have the following observations:
\begin{enumerate}
\item[(i)] Convolutional filtering plays a crucial role in the performance promotion like the standard CNN in Euclidean space. In manifold space composed of covariance matrices, the local convolution filtering should more extract some bundling features co-occurred for certain task.
\item[(ii)] The SPD recursive layer can further improve the performance due to the introduction of sequence dynamics. According to the performances of DMT-Net vs DMT-Net without SPD recursive layer, the accuracy of  DMT-Net is almost 9 percent higher than the network without recursive layer.
\item[(iii)] The Euclidean representation deteriorate the performance compared to manifold representation. The possible reason should be the relationship dependency is more important and robust than raw features in the representation of specific body skeleton data ( even extended into dynamic graph data).
\end{enumerate}

\section{Conclusion}
In this paper, we proposed a novel framework named DMT-Net to model the spatio-temporal dynamic sequences. We segmented the entire sequence into several clips, each of which is described with one SPD matrix.  Since SPD matrices are embedded on Riemannian manifold, we specifically designed a series of novel layers to guarantee matrix transformations still flow on manifolds. The constructed layers contain SPD convolutional, non-linear activation, SPD recursive and the diagonalizing layer. All these layers do not need SVD operation which has high-expensive computation cost. The whole design is generic for the representation learning of manifolds, thus has a constructive value to deep learning and manifold learning.  We all conducted experiments on the task of skeleton-based action recognition, and achieved the state-of-the-art performance under the same experimental environments. In our future work, we will explore more applications for our proposed network framework.

\appendices
\section{Preliminary}\label{APP_A}
\begin{theorem}\label{spd:sigconv}
Given an SPD matrix $\mathbf{X}\in \mathbb{R}^{D  \times D}$. Let $\mathbf{W}\in\mathbb{R}^{K  \times K}$ be a convolutional kernel, then the convolutional operation is defined as
\begin{equation} \label{eqA1_1}	
      O_{i,j}= \sum_{p=0}^{K-1}\sum_{q=0}^{K-1}W_{p,q}X_{i+p,j+q},
\end{equation}
where $\mathbf{O}\in \mathbb{R}^{(D-K+1)  \times (D-K+1)}$ is the filtering output. If the convolutional kernel $\mathbf{W}$ is SPD, then the output map $\mathbf{O}$ is also SPD.
\end{theorem}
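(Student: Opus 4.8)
The plan is to verify the two defining properties of an SPD matrix for $\mathbf{O}$ separately: symmetry, and then strict positive definiteness. Symmetry is the easy part and I would dispatch it first. Starting from $O_{j,i}=\sum_{p,q}W_{p,q}X_{j+p,i+q}$, relabelling the summation indices $p\leftrightarrow q$ and then invoking the symmetry of both $\mathbf{W}$ and $\mathbf{X}$ (so that $W_{q,p}=W_{p,q}$ and $X_{j+q,i+p}=X_{i+p,j+q}$) returns exactly $O_{i,j}$. Hence $\mathbf{O}=\mathbf{O}^T$.

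The substantive content is positive definiteness, and here I would first rewrite the componentwise convolution as a single matrix identity. Introduce, for each shift $p\in\{0,\dots,K-1\}$, the selection matrix $\mathbf{E}_p\in\mathbb{R}^{(D-K+1)\times D}$ with $(\mathbf{E}_p)_{i,k}=1$ exactly when $k=i+p$ and zero otherwise, so that $\mathbf{E}_p\mathbf{X}\mathbf{E}_q^T$ has entries $X_{i+p,j+q}$. Then Eqn.~(\ref{eqA1_1}) becomes $\mathbf{O}=\sum_{p,q}W_{p,q}\,\mathbf{E}_p\mathbf{X}\mathbf{E}_q^T$. Next I would diagonalize the SPD kernel as $\mathbf{W}=\sum_r\lambda_r\mathbf{u}_r\mathbf{u}_r^T$ with all $\lambda_r>0$ and $\{\mathbf{u}_r\}$ an orthonormal basis of $\mathbb{R}^K$. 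Substituting $W_{p,q}=\sum_r\lambda_r(u_r)_p(u_r)_q$ and setting $\mathbf{A}_r=\sum_p(u_r)_p\mathbf{E}_p$ collapses the double sum into the congruence form $\mathbf{O}=\sum_r\lambda_r\,\mathbf{A}_r\mathbf{X}\mathbf{A}_r^T$. Since $\mathbf{X}$ is SPD, each summand is positive semidefinite, so $\mathbf{O}$ is at least PSD for free: $\mathbf{v}^T\mathbf{O}\mathbf{v}=\sum_r\lambda_r(\mathbf{A}_r^T\mathbf{v})^T\mathbf{X}(\mathbf{A}_r^T\mathbf{v})\ge 0$.

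The main obstacle is upgrading this to \emph{strict} positivity, i.e.\ showing $\mathbf{v}^T\mathbf{O}\mathbf{v}>0$ for every nonzero $\mathbf{v}\in\mathbb{R}^{D-K+1}$. Because $\mathbf{X}$ is strictly PD and every $\lambda_r>0$, the sum above vanishes only if $\mathbf{A}_r^T\mathbf{v}=\mathbf{0}$ for all $r$, so it suffices to prove $\bigcap_r\ker(\mathbf{A}_r^T)=\{\mathbf{0}\}$. The key observation I would use is that $\mathbf{A}_r^T\mathbf{v}$ is nothing but the discrete convolution $\mathbf{u}_r*\mathbf{v}$, since $(\mathbf{A}_r^T\mathbf{v})_k=\sum_p(u_r)_p v_{k-p}$. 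As convolution is linear in its first argument and $\{\mathbf{u}_r\}$ spans $\mathbb{R}^K$, the assumption $\mathbf{u}_r*\mathbf{v}=\mathbf{0}$ for all $r$ forces $\mathbf{w}*\mathbf{v}=\mathbf{0}$ for every $\mathbf{w}\in\mathbb{R}^K$; taking $\mathbf{w}$ to be the convolution identity $(1,0,\dots,0)^T$ yields $\mathbf{v}=\mathbf{0}$, a contradiction. Hence at least one term is strictly positive and $\mathbf{O}$ is PD. I expect the only places requiring care are the index bookkeeping for $\mathbf{E}_p$ (ensuring $1\le i+p\le D$ throughout) and confirming that the eigenvectors of $\mathbf{W}$ genuinely span $\mathbb{R}^K$ — which is exactly where the SPD hypothesis on the kernel, as opposed to mere symmetry or positive semidefiniteness, is essential.
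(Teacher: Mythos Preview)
Your proof is correct and shares the paper's core idea: decompose the SPD kernel into rank-one pieces and rewrite the 2D convolution as a sum of congruences $\mathbf{A}\mathbf{X}\mathbf{A}^T$. The paper uses a factorisation $\mathbf{W}=\mathbf{H}\mathbf{H}^T$ with $\mathbf{H}$ full rank (rather than your spectral decomposition), then invokes separable convolution to reach $\mathbf{O}=\sum_i\mathbf{G}_{\mathbf{h}_i}\mathbf{X}\mathbf{G}_{\mathbf{h}_i}^T$, where the Toeplitz-like $\mathbf{G}_{\mathbf{h}_i}$ is exactly your $\mathbf{A}_r$ built from $\mathbf{h}_i$ instead of $\mathbf{u}_r$. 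For strict positivity the paper takes a shorter route than yours: it simply observes that each $\mathbf{G}_{\mathbf{h}_i}$ has full row rank (immediate from the banded structure once $\mathbf{h}_i\neq\mathbf{0}$), so every summand is itself SPD. Your kernel-intersection argument via $\bigcap_r\ker(\mathbf{A}_r^T)=\{\mathbf{0}\}$ works, but you could have reached the same conclusion directly since each $\mathbf{u}_r$ is a unit vector. One small inaccuracy in your closing remark: the eigenvectors of \emph{any} real symmetric matrix span $\mathbb{R}^K$; what the SPD hypothesis on $\mathbf{W}$ actually buys is that every $\lambda_r>0$, so no rank-one term drops out of the sum and the full spanning set remains in play.
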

\begin{proof}
As $\mathbf{W}$ is SPD, it can  be decomposed into:
\begin{eqnarray}
   \mathbf{W}= \mathbf{H} \mathbf{H}^T,
\end{eqnarray}
where $\mathbf{H}=[\mathbf{h}_1, \mathbf{h}_2, \cdots,  \mathbf{h}_K]$ is a matrix of full rank. Then the convolutional result of an  SPD representation matrix $\mathbf{X} \in \mathbb{R}^{D \times D}$  can be written as
\begin{eqnarray}
    \mathbf{O}=\mathbf{X}*\mathbf{W}= \mathbf{X}*(\mathbf{H} \mathbf{H}^T) ~~~~~~~~~~~~~~~~~~~~~~~~~~~~~~~~~~~~~~~~~\\
 \label{Ae_3}   =\mathbf{X}*(\mathbf{h}_1\mathbf{h}_1^T)+\cdots+\mathbf{X}*(\mathbf{h}_K\mathbf{h}_K^T) ~~\\
 \label{Ae_4}  =\mathbf{X}*\mathbf{h}_1*\mathbf{h}_1^T+\cdots+\mathbf{X}*\mathbf{h}_K*\mathbf{h}_K^T,
\end{eqnarray}
where the derivation from Eqn.~(\ref{Ae_3}) to Eqn.~(\ref{Ae_4}) uses the property of separable convolution. Suppose that $\mathbf{h}_i=[n_1, n_2, \cdots,n_D]^T, ~i=[1,2,\cdots,D]$, the convolution between $\mathbf{X}$ and $\mathbf{h}_i$ can be written as:\\
      $~~~~~~~~~~~~~~\mathbf{X}*\mathbf{h}_i=\mathbf{G}_{\mathbf{h}_i}\mathbf{X},~~~\mathbf{X}*\mathbf{h}_i^T=\mathbf{X}\mathbf{G}_{\mathbf{h}_i}^T$ ,\\
where
     $\mathbf{G}_{\mathbf{h}_i}\in\mathbb{R}^{(N-D+1) \times N}$  and
\begin{eqnarray}
 ~~~~~~~~~~ \mathbf{G}_{\mathbf{h}_i}=  \begin{bmatrix}
                               n_1, ~ n_2,  ~      \cdots,     ~    n_D,  ~      0,     ~      0, ~ 0,~ \cdots, ~ 0\\
                            ~     0,    ~ ~   n _1,         n_2,   ~     \cdots,  ~    n_D,        0, ~ 0, ~ \cdots,~  0\\
                            ~     0,  ~ ~    0,         ~       n_1,        n_2,   ~   \cdots,   ~    n_D ,  0,~   \cdots,~  0\\
                                                                  \cdots    \\
                            ~     0,   ~ ~       0,       ~   \cdots,   ~ ~ 0,      ~    n_1, ~ ~  n_2,  ~  \cdots,   ~ ~  n_D\\
 \end{bmatrix}.
\end{eqnarray}
Then we get the following equations:
\begin{eqnarray}
           \mathbf{X}*\mathbf{h}_i*\mathbf{h}_i^T= \mathbf{G}_{\mathbf{h}_i}\mathbf{X}\mathbf{G}_{\mathbf{h}_i}^T,
\end{eqnarray}
and
\begin{eqnarray}
           \mathbf{O}= \mathbf{X}*\mathbf{H} = \mathbf{G}_{\mathbf{h}_1}\mathbf{X}\mathbf{G}_{\mathbf{h}_1}^T+\cdots+\mathbf{G}_{\mathbf{h}_D}\mathbf{X}\mathbf{G}_{\mathbf{h}_D}^T.
\end{eqnarray}
As the rank of $\mathbf{G}_{\mathbf{h}_i}$ equals $N-D+1$, $\mathbf{G}_{\mathbf{h}_i}\mathbf{X}\mathbf{G}_{\mathbf{h}_i}^T$ is also an SPD matrix. Thus $\forall$ $\mathbf{z}\in\mathbb{R}^{N}, \mathbf{z}\neq\mathbf{0}$,  we have
\begin{eqnarray}
        \mathbf{z}^T\mathbf{O}\mathbf{z}=\sum_{i=1}^D\mathbf{z}^T\mathbf{G}_{\mathbf{h}_i}\mathbf{X}\mathbf{G}_{\mathbf{h}_i}^T\mathbf{z}>0.
\end{eqnarray}
So $\mathbf{O}$ is an SPD matrix.
 \end{proof}

\section{Proof of Theorem \ref{thm:mkconv}}\label{APP_B}
\begin{proof}
The $m$-th channel of  $\mathbf{F}$ can  be written as:
\begin{eqnarray}
   \mathbf{F}^{(m)}= \sum_{c=1}^C\mathbf{X}^{(c)}*\mathbf{W}^{(m, c)},
\end{eqnarray}
where $\mathbf{X}^{(c)}$ denotes the $c$-th channel of  input descriptor, apparently $\mathbf{X}^{(c)}$ and  $\mathbf{W}^{(m, c)}$ are SPD matrices. According to Theorem~\ref{spd:sigconv}, $\mathbf{F}^{(m)}$ is an SPD matrix.
So  $\mathbf{F}$ is also an multi-channel SPD matrix.
 \end{proof}

\section{Proof of Theorem \ref{thm:activ}}\label{APP_C}
\begin{proof}
We take $exp(\cdot)$ as an example. Let $\mathbf{X} =[X_{ij}]_{D \times D}$ denote an SPD matrix, then the element-wise activation result can be denoted as:
\begin{eqnarray}
   exp(\mathbf{X})=\begin{bmatrix}
                               e^{X_{11}}, ~ e^{X_{12}},  ~      \cdots,     e^{X_{1D}}\\
                               e^{X_{21}}, ~ e^{X_{22}},  ~      \cdots,     e^{X_{2D}}\\
                                                                  \cdots    \\
                               e^{X_{D1}}, ~ e^{X_{D2}},  ~      \cdots,     e^{X_{DD}}\\
 \end{bmatrix}, ~~~~~~~~~~~~~~~~~~~~
  \\=\begin{bmatrix}
                               \sum_{i=0}^{\infty}\frac{X_{11}^i}{i!}, ~\sum_{i=0}^{\infty}\frac{X_{12}^i}{i!},  ~ \cdots,  \sum_{i=0}^{\infty}\frac{X_{1D}^i}{i!} \\
                               \sum_{i=0}^{\infty}\frac{X_{21}^i}{i!}, ~\sum_{i=0}^{\infty}\frac{X_{22}^i}{i!},  ~ \cdots,  \sum_{i=0}^{\infty}\frac{X_{2D}^i}{i!}\\
                                                                  \cdots    \\
                                \sum_{i=0}^{\infty}\frac{X_{D1}^i}{i!}, ~\sum_{i=0}^{\infty}\frac{X_{D2}^i}{i!}, ~ \cdots,  \sum_{i=0}^{\infty}\frac{X_{DD}^i}{i!}\\
 \end{bmatrix},
\\=\mathbf{1}+\mathbf{X}+\frac{1}{2}\mathbf{X}\circ\mathbf{X}+\frac{1}{3!}\mathbf{X}\circ\mathbf{X}\circ\mathbf{X}+\cdots, ~~~~~~~~
\end{eqnarray}
where $\circ$ means Hadamard product of two matrices. According to the Schur product theorem, $\mathbf{X}\circ\mathbf{X}\cdots\mathbf{X}$ is an SPD matrix.
So $exp(\mathbf{X})$, which equals the summation of multiple positive definite matrices and semi-positive definite matrices, is an SPD matrix. Similarly, we can prove that
\begin{eqnarray}
sinh(\mathbf{X})=\sum_{i=0}^{\infty}\frac{\mathbf{X}^{2i+1}}{(2i+1)!}
\end{eqnarray}
and
\begin{eqnarray}
cosh(\mathbf{X})=\sum_{i=0}^{\infty}\frac{\mathbf{X}^{2i}}{(2i)!}
\end{eqnarray}
are also SPD matrices where  $\mathbf{X}^i$ means element-wise power here.
\end{proof}

\section{Proof of Theorem \ref{thm:SPD recursive layer}}\label{APP_D}
\begin{proof}
SPD recursive layer mainly contains  three kinds of operations which are  bilinear projection with diagonal bias, non-linear activation functions including $exp(\mathbf{\cdot})$ and $sinh(\mathbf{\cdot})$, and Hadamard product. According to  the definition of SPD, Theorem~\ref{thm:activ} and Schur product theorem respectively, these three  operations can be easily proved to preserve symmetric positive definiteness. Moreover, as $exp(\mathbf{\cdot})$ preserves symmetric positive definiteness and $\forall \mathbf{X}$, we have $max( exp(\mathbf{X}))>0$, so $\sigma_g(\mathbf{X})$ also preserves symmetric positive definiteness.

Then  we can prove Theorem~\ref{thm:SPD recursive layer} with mathematical induction. When $t=1$, the initial hidden state denoted as $\mathbf{H}_0$ is set to be zero matrix. Then $\mathbf{R}_1$ can be rewritten as
\begin{eqnarray}
 \mathbf{R}_1= \sigma_g(\mathbf{W}_{fr}^T\mathbf{F}_1\mathbf{W}_{fr}+b_r+\epsilon*\mathbf{I}),~~~\\
 \mathbf{Z}_1= \sigma_g(\mathbf{W}_{fz}^T\mathbf{F}_1\mathbf{W}_{fz}+b_z+\epsilon*\mathbf{I}),~~~\\
\widetilde{\mathbf{H}}_1= sinh(\mathbf{W}_{fh}^T\mathbf{F}_1\mathbf{W}_{fh}+b_h+\epsilon*\mathbf{I}).
 \end{eqnarray}
As $\mathbf{F}_1$ is SPD, then apparently $ \mathbf{R}_1, \mathbf{Z}_1, \widetilde{\mathbf{H}}_1$ are all SPD. Thus  $\mathbf{H}_1=\widetilde{\mathbf{H}}_1$ is also SPD.

Then $\forall t\in[2,\cdots,T]$, if $\mathbf{H}_{t-1}$ is SPD,  similar to the situation of $t=1$, we can also prove that $\mathbf{R}_t$, $\mathbf{Z}_t$, $\widetilde{\mathbf{H}}_t$ and ${\mathbf{H}}_t$ in Eqn.~\ref{gru_1},~\ref{gru_2}, ~\ref{gru_3} and ~\ref{gru_4} are SPD. This is because  all the operations dealing with $\mathbf{H}_{t-1}$ and $\mathbf{F}_t$  preserve symmetric positive definiteness.

Thus, according to mathematical induction,  the output states of SPD recursive layer, i.e. $\mathbf{H}_t, t=1,\cdots,T$,  are  SPD.

\end{proof}


\end{document}